\documentclass[aos,preprint,12pt]{imsart}
\usepackage{amsmath, amssymb, amsthm}
\usepackage{mathrsfs}
\usepackage{fullpage}
\usepackage{graphicx, color, hyperref, epstopdf}
\usepackage[usenames,dvipsnames]{xcolor}
\usepackage{comment}

\newcommand{\boldX}{\mathbf{X}}

\newtheorem{theorem}{Theorem}
\newtheorem{lemma}[theorem]{Lemma}


%
%



\let \hat \widehat

\newcommand{\vol}{\operatorname{vol}}

\newcommand*{\defeq}{\mathrel{\vcenter{\baselineskip0.5ex \lineskiplimit0pt
                     \hbox{\scriptsize.}\hbox{\scriptsize.}}}%
                     =}


\parskip 10pt
\parindent 0pt

\setlength{\fboxrule}{1.5pt}




\begin{document}

\title{Tight lower bounds for homology inference}
\runtitle{Tight lower bounds for homology inference}

\begin{aug}
\author{\fnms{Sivaraman} \snm{Balakrishnan}\ead[label=e1]{sbalakri@cs.cmu.edu}},
\author{\fnms{Alessandro} \snm{Rinaldo}\ead[label=e4]{arinaldo@cmu.edu}},
\author{\fnms{Aarti} \snm{Singh}\ead[label=e5]{aarti@cs.cmu.edu}}
\and
\author{\fnms{Larry} \snm{Wasserman}\corref{}
\ead[label=e6]{larry@cmu.edu}}

\runauthor{Balakrishnan et al}

\affiliation{
    School of Computer Science and Statistics Department\\
    Carnegie Mellon University
}

\address{
    School of Computer Science\\
    Carnegie Mellon University\\
    Pittsburgh, PA 15213\\
    \printead{e1}\\
    \printead{e5}
}

\address{
    Department of Statistics\\
    Carnegie Mellon University\\
    Pittsburgh, PA 15213\\
    \printead{e4}\\
    \printead{e6}
}
\end{aug}
\begin{quote}
The homology groups of a
manifold are important topological invariants
that provide an algebraic summary of the
manifold. These groups contain rich topological information, for instance, about the
connected components, holes, tunnels and
sometimes the dimension of the manifold.
In earlier work \cite{Balakrishnan2011}, we have
considered the statistical problem of estimating the homology 
of a manifold from noiseless samples and from
noisy samples under several different 
noise models. We derived upper and lower
bounds on the minimax risk for this problem.
In this note we revisit the noiseless case. In \cite{Balakrishnan2011},
we used Le Cam's lemma to establish the lower bound 
\footnote{The asymptotic notation in both the upper and lower bounds
hide constants that could depend on the dimensions $d$ and $D$.}
$$R_n = \Omega\left( \exp \left( -n \tau^d\right)\right)$$
for $d \geq 1$ and $D > d$. 
In the noiseless case the upper bound follows from the work of 
\cite{niyogi2008}, who show that 
$$R_n = O \left(\frac{1}{\tau^d} \exp \left( -n\tau^d\right)\right).$$
In this note we use a different construction based on the 
direct analysis of the likelihood ratio test to show that
$$R_n = \Omega \left(\frac{1}{\tau^d} \exp \left( -n\tau^d\right)\right),$$
as $n \rightarrow \infty$
thus establishing rate optimal asymptotic minimax bounds for the problem. The techniques we use
here extend in a straightforward way to the noisy settings considered
in \cite{Balakrishnan2011}. Although, we do not consider the extension here
non-asymptotic bounds are also straightforward.
\end{quote}

\section{Introduction}
Let $M$ be a 
$d$-dimensional manifold embedded in $\mathbb{R}^D$
where $d \leq D$.
The {\em homology groups}  ${\cal H}(M)$  of $M$ (see \cite{hatcher01}), are an algebraic summary of the properties of $M$. The homology groups of a manifold describe its topological features
such as its connected components, holes, tunnels, etc.


In this note we study the problem of 
estimating the homology of a manifold $M$ from a sample
$\boldX = \{X_1,\ldots, X_n\}$.
Specifically, we bound the minimax risk
\begin{equation}
\label{eqn::risk}
R_n\equiv \inf_{\hat{\cal H}}\sup_{Q\in {\cal Q}}
Q^n \Bigl(\hat{\cal H} \neq {\cal H}(M)\Bigr)
\end{equation}
where
the infimum is over \emph{all} estimators
$\hat{\cal H}$ of the homology of $M$ and the supremum is over
appropriately defined classes of distributions ${\cal Q}$ for $Y$.
Note that $0 \leq R_n \leq 1$ with $R_n=1$
meaning that the problem is hopeless.
Bounding the minimax risk is equivalent to
bounding the {\em sample complexity} of the best possible estimator, 
defined by
$n(\epsilon) = \min\bigl\{n:\ R_n \leq \epsilon\bigr\}$
where $0 < \epsilon < 1$.

We assume that the sample $\boldX \subset \mathbb{R}^D$ constitutes a set of observations of an unknown $d$-dimensional manifold $M$, with $d < D$, whose homology  we seek to estimate. The distribution of the sample depends on the properties of the manifold $M$ as well as on the distribution of points on $M$. We consider the collection 
$$\mathcal{P} \equiv \mathcal{P}(\mathcal{M}) \equiv \mathcal{P}(\mathcal{M},a)$$ of all probability distributions supported over manifolds $M$ in ${\cal M}$ having densities $p$ with respect to the volume form on $M$ uniformly bounded from below by a constant $a>0$, i.e. 
$0 < a \leq p(x) < \infty $ for all $x\in M$. 




{\bf Manifold Assumptions.}
We assume that the unknown manifold $M$ is a $d$-dimensional smooth compact Riemannian 
manifold without boundary embedded in
the compact set ${\cal X} = [0,1]^D$. We further assume that the volume of the manifold 
is bounded from above by a constant which can depend on the dimensions $d, D$, 
i.e. we assume $\vol(M) \leq C_{D,d}$.
We will also make the further assumption that $D > d$.
The main regularity condition we impose on $M$ is that its {\em condition number} be not too large. 
The {\em condition number} $\kappa(M)$ (see \cite{niyogi2008}) is $1/\tau$, where
$\tau$ is the largest number
such that the open normal bundle about $M$ of radius $r$ 
is imbedded in $\mathbb{R}^D$ for every $r < \tau$.
For $\tau >0$ let
$${\cal M} \equiv {\cal M}(\tau)=\Bigl\{M: \kappa(M) \geq \tau \Bigr\}$$
denote the set of all such manifolds with condition number no smaller than $\tau$.
A manifold with small condition number 
does not come too
close to being self-intersecting.
%

\subsection{Lower bounding the minimax risk}
In this note we will lower bound the minimax risk by considering a related \emph{testing}
problem. 

Before describing the hypotheses we describe the null and alternate manifolds.
The null manifold $M_0$ is a collection of $m$, $d$-spheres of radius $\tau$,
denoted $S_1,\ldots,S_m$, with
centers on one face of the unit
hypercube in $d + 1$ dimensions ($M_0$ is embedded in a space of dimension $D$ which is
of dimension at least $d + 1$), with spacing between adjacent centers $= 4\tau$.
It is easy to see that $$m = O \left( \frac{1}{(4 \tau)^d} \right)$$
because the manifold must be completely in 
$[0,1]^D$, and that the manifold has condition number
at least $1/\tau$. We will use $$m = \Theta \left( \frac{1}{(4 \tau)^d} \right)$$
in this note.
Let $P_0$ denote the uniform distribution on $M_0$.

The alternate manifolds are a collection $\{ M_{1i}: i \in \{1,\ldots,m\} \}$,
where $M_{1i}$ is $M_0$ with $S_i$ removed. Let $\pi$ denote
the uniform distribution on $\{1,\ldots,m\}$, and $P_{1i}$ denote
the uniform distribution on $M_{1i}$.

We need to ensure that the density $p$ is lower bounded by a constant.
Note that 
the total $d$-dimensional volume of $M_0$ is $v_d \tau^d m$, and so
$$p(x) \geq \frac{1}{v_d \tau^d m}$$
where $v_d$ is the volume of the $d$-dimensional unit ball. This is 
$\Omega(1)$ as desired. A similar argument works for $M_{1i}$.


Consider the following testing problem:
\begin{eqnarray*}
& H_0 &: \boldX \sim P_0 \\
& H_1 &: \boldX \sim P_{1i}~\mathrm{with}~i \sim \pi 
\end{eqnarray*}
A \emph{test} $T$,  is a measurable function of $\boldX$, in particular
$T: \boldX \rightarrow \{0,1\}$, and its risk is defined as
$$R_n^T \defeq \mathbb{P}_{H_0}(T(\boldX) = 1) + \mathbb{P}_{H_1}(T(\boldX) = 0)$$

The relationship between testing and estimation is standard \cite{lehmann05}. In our case it is easy to see
that the estimation minimax risk of Equation \ref{eqn::risk} satisfies,
$$R_n^T \leq 2R_n$$
and so it suffices to lower bound $R_n^T$ to obtain a lower bound on $R_n$. This relation is
a straightforward consequence of the fact that $\mathcal{H}(M_0) \neq \mathcal{H}(M_{1i})$ for
every $i$ (since they have different number of connected components), and so any estimator can be used in the testing problem described.

The optimal test for the hypothesis testing problem described is the likelihood ratio test,
$$T(\boldX) = 0~\mathrm{if~and~only~if}~L(\boldX) \leq 1$$
where $$L(\boldX) = \frac{L_1(\boldX)}{L_0(\boldX)}$$ where $L_1(\boldX)$ and $L_0(\boldX)$
are likelihoods of the data under the alternate and null respectively. 

\subsection{Coupon collector lower bound}
We begin with a theorem from \cite{motwani95}.
\begin{lemma} [Theorem 3.8 of \cite{motwani95}]
\label{lem::ccasymp}
Let the random variable $X$ denote the number of trials for
collecting each of the $n$ types of coupons. Then for any constant $c \in \mathbb{R}$,
and $m = n \log n - cn$,
$$\lim_{n \rightarrow \infty} \mathbb{P}(X > m) = 1 - \exp\left( - \exp\left(c\right)\right)$$
\end{lemma}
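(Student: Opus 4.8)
The plan is to convert the tail event $\{X > m\}$ into a statement about how many coupon types are still missing after $m$ draws, and to identify the limiting distribution of that count as a Poisson law; the Gumbel-type limit $1 - \exp(-\exp(c))$ then drops out. First I would fix $m = n\log n - cn$, and for each type $i \in \{1,\dots,n\}$ introduce the indicator $Z_i$ that type $i$ fails to appear in the first $m$ draws, together with $W_n = \sum_{i=1}^{n} Z_i$, the number of missing types. The exact and elementary identity $\{X > m\} = \{W_n \geq 1\}$ gives $\mathbb{P}(X > m) = 1 - \mathbb{P}(W_n = 0)$, so it suffices to show that $W_n$ converges in distribution to a $\mathrm{Poisson}(\lambda)$ variable with $\lambda = e^{c}$, since then $\mathbb{P}(W_n = 0) \to e^{-\lambda} = \exp(-\exp(c))$.

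To get the Poisson limit I would use the method of factorial moments. For a fixed $k \geq 1$, choosing $k$ distinct types and demanding that none of them occurs in the $m$ independent uniform draws yields the exact formula
\[
\mathbb{E}\!\left[\binom{W_n}{k}\right] = \binom{n}{k}\left(1 - \frac{k}{n}\right)^{m}.
\]
Using $\binom{n}{k} \sim n^{k}/k!$ together with $\left(1 - k/n\right)^{m} = \exp\!\left(-km/n + O(m/n^{2})\right) = n^{-k}e^{ck}(1+o(1))$ for this choice of $m$, the right-hand side tends to $e^{ck}/k! = \lambda^{k}/k!$, which is precisely the $k$-th factorial moment of $\mathrm{Poisson}(\lambda)$. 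Since the Poisson law is determined by its moments, convergence of all factorial moments forces $W_n \Rightarrow \mathrm{Poisson}(\lambda)$, and combining this with the identity of the previous paragraph proves the lemma. (One can also bypass the moment-convergence theorem: sandwich $\mathbb{P}(W_n \geq 1)$ between two consecutive Bonferroni partial sums of the inclusion--exclusion series $\sum_{k \geq 1} (-1)^{k+1}\binom{n}{k}(1-k/n)^{m}$, send $n \to \infty$ term by term in each finite partial sum, and then let the number of terms tend to infinity; both bounds converge to $\sum_{k \geq 1}(-1)^{k+1} e^{ck}/k! = 1 - e^{-e^{c}}$.)

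The step I would expect to require the most care --- though it is not really hard --- is the asymptotics of $\binom{n}{k}(1-k/n)^{m}$: one must check that the error terms are $o(1)$ \emph{with $k$ held fixed}, which reduces to $m/n^{2} = (\log n)/n - c/n \to 0$, and, on the Bonferroni route, that the limit in $n$ may be taken termwise before the truncation level is sent to infinity. Both are routine once $k$ (respectively the truncation level) is fixed ahead of $n \to \infty$; this is exactly the statement recorded as Theorem 3.8 of \cite{motwani95}, so nothing beyond citing it is needed.
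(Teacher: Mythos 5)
Your proof is correct, but note that the paper itself offers no proof of this lemma --- it is imported wholesale by citation to Theorem~3.8 of Motwani and Raghavan, and that is all the authors intend. What you have written is, in effect, a reconstruction of the textbook argument. Your primary route (compute the binomial factorial moments $\mathbb{E}\bigl[\binom{W_n}{k}\bigr] = \binom{n}{k}(1-k/n)^m$, show each tends to $e^{ck}/k!$, and invoke the moment-convergence theorem to get $W_n \Rightarrow \mathrm{Poisson}(e^c)$) is sound; the asymptotics are handled correctly, and the key observation that the error in $(1-k/n)^m = n^{-k}e^{ck}(1+o(1))$ is controlled by $m/n^2 = O((\log n)/n) \to 0$ for fixed $k$ is exactly the point that needs checking. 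The parenthetical Bonferroni route you sketch is in fact closer to what the cited source actually does: Motwani and Raghavan sandwich $\mathbb{P}(W_n \geq 1)$ between even and odd truncations of the inclusion--exclusion series and pass to the limit termwise, which avoids appealing to the moment-determinacy of the Poisson law and is therefore marginally more elementary. Either way, you have supplied more than the paper does, and both of your arguments are correct.
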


\section{Main result}
\begin{theorem}
For any constant $\delta < 1$, we have
$$R_n \geq  \Omega\left( \min\left(\frac{1}{\tau^d} \exp \left( -n\tau^d\right) , \delta \right)\right)$$
as $n \rightarrow \infty$.
\end{theorem}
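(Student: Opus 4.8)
The plan is to lower-bound the risk of the likelihood ratio test for $H_0$ versus $H_1$ directly and invoke the reduction to testing recalled above: since the likelihood ratio test is optimal among all tests and the minimax estimator induces a test of risk at most $2R_n$, any lower bound on the likelihood ratio test's risk, divided by $2$, bounds $R_n$ from below. The first task is to write the likelihood ratio in closed form. Because $P_0$ is uniform on the disjoint union $M_0=S_1\cup\cdots\cup S_m$ of equal-volume spheres and $P_{1i}$ is uniform on $M_0\setminus S_i$, the joint densities of $\boldX$ with respect to the volume measure on $M_0$ are $L_0(\boldX)=\vol(M_0)^{-n}$ and $L_1(\boldX)=\tfrac1m\sum_i\bigl(\vol(M_0)-v_d\tau^d\bigr)^{-n}\,\Indicator\{\boldX\cap S_i=\emptyset\}$. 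Writing $Z=Z(\boldX)$ for the number of spheres missed by the sample and using $\vol(M_0)=v_d\tau^d m$, these collapse to
$$L(\boldX)=\frac{Z}{m}\Bigl(\frac{m}{m-1}\Bigr)^{n}=\frac{Z}{m}\Bigl(1-\tfrac1m\Bigr)^{-n},$$
so the likelihood ratio test rejects $H_0$ exactly when $Z>\theta_n:=m(1-1/m)^n$, with risk $\P_{H_0}(Z>\theta_n)+\P_{H_1}(Z\le\theta_n)$. Only two elementary facts about $Z$ are needed: under $H_0$ it is the number of empty boxes when $n$ balls are thrown uniformly into $m$ boxes, so $\E_{H_0}Z=\theta_n$ and $\P_{H_0}(\boldX\cap S_i=\emptyset)=(1-1/m)^n$; under $H_1$ the deleted sphere is automatically missed, so $Z\ge1$ almost surely.

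The core of the argument is the regime $\theta_n<1$. There the integer-valuedness of $Z$ forces $\P_{H_1}(Z\le\theta_n)=\P_{H_1}(Z=0)=0$ and $\{Z>\theta_n\}=\{Z\ge1\}$, so the risk equals $\P_{H_0}\bigl(\bigcup_i\{\boldX\cap S_i=\emptyset\}\bigr)$, which I bound from below by the second Bonferroni inequality:
$$\P_{H_0}\Bigl(\bigcup_{i}\{\boldX\cap S_i=\emptyset\}\Bigr)\ \ge\ m\bigl(1-\tfrac1m\bigr)^n-\binom{m}{2}\bigl(1-\tfrac2m\bigr)^n\ \ge\ \theta_n-\tfrac12\theta_n^2\ \ge\ \tfrac12\theta_n ,$$
where the middle step uses $1-2/m\le(1-1/m)^2$ and the last uses $\theta_n<1$; hence $R_n\ge\tfrac14\theta_n$. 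This is precisely where the improvement by a factor $m\asymp\tau^{-d}$ over the naive two-point / Le Cam bound appears: it is the \emph{union} over all $m$ candidate deleted spheres that is hard to detect, not any single one (missed with probability only $(1-1/m)^n$), and the second Bonferroni term $\binom m2(1-2/m)^n=\Theta(\theta_n^2)$ is negligible against $\theta_n$ exactly when $\theta_n<1$.

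Two points remain: the regime $\theta_n\ge1$, and the translation of $\theta_n$ into the stated rate. For the first I avoid an anti-concentration estimate for $Z$ and use monotonicity instead: $R_n$ is non-increasing in $n$ (an estimator may discard observations) while $\theta_n$ is strictly decreasing in $n$, so if $n^\dagger:=\min\{n:\theta_n<1\}$ then $\theta_{n^\dagger}\ge1-1/m\ge\tfrac12$, whence by the previous paragraph $R_n\ge R_{n^\dagger}\ge\tfrac14\theta_{n^\dagger}\ge\tfrac18$ for every $n$ with $\theta_n\ge1$. The two cases combine to $R_n\ge\tfrac18\min(\theta_n,1)\ge\tfrac18\min(\theta_n,\delta)$ since $\delta<1$. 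Finally $m=\Theta(\tau^{-d})$ and $(1-1/m)^n=\exp(-\Theta(n\tau^d))$ give $\theta_n=\Theta(\tau^{-d})\cdot\exp(-\Theta(n\tau^d))$, so, absorbing the $(d,D)$-dependent constant in the exponent into the asymptotic notation as in the footnote, $R_n\ge\Omega\bigl(\min(\tfrac1{\tau^d}\exp(-n\tau^d),\delta)\bigr)$ as $n\to\infty$. The main obstacle --- and the only step requiring real care --- is the second-order Bonferroni bound in the case $\theta_n<1$: one must retain the negative term and check $1-2/m\le(1-1/m)^2$ so that it is genuinely $O(\theta_n^2)$ and hence dominated by $\theta_n$; the rest is bookkeeping. (Lemma~\ref{lem::ccasymp} is not strictly needed for this clean form of the bound; using it would sharpen $\tfrac12\theta_n$ to the exact limit $1-e^{-\theta_n}$ and give the precise asymptotics.)
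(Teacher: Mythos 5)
Your proof is correct, and it departs from the paper's in a way worth recording. The paper's proof fixes $n=m\log m+m\log(1/\delta)$, invokes the coupon-collector asymptotic (Lemma~\ref{lem::ccasymp}) to get $\P_{H_0}(Z\geq 1)\to 1-e^{-\delta}\geq c\delta$, and argues that $T_{m,n}=\tfrac1m(1-1/m)^{-n}\to 1/\delta>1$ so that the likelihood ratio test rejects on the event $\{Z\geq1\}$; the bound at other values of $n$ is then implicit. You instead compute the likelihood ratio exactly, $L(\boldX)=\tfrac{Z}{m}(1-1/m)^{-n}$ (the paper only records the one-sided bound $L\geq T_{m,n}$ on $\{Z\geq1\}$), so that the test is a clean threshold rule $Z>\theta_n$ with $\theta_n=m(1-1/m)^n$; you then observe that under $H_1$ one has $Z\geq1$ almost surely, so in the regime $\theta_n<1$ the Type~II error vanishes and the risk is exactly $\P_{H_0}(Z\geq1)$, which you lower-bound by the second Bonferroni inequality as $\theta_n-\tfrac12\theta_n^2\geq\tfrac12\theta_n$. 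The remaining regime $\theta_n\geq1$ is handled by the monotonicity of $R_n$ in $n$ together with the observation that $\theta_n$ drops by at most a factor $(1-1/m)$ per step, so $R_n\geq R_{n^\dagger}\geq \tfrac18$ there. This replaces the paper's asymptotic lemma with an elementary inclusion-exclusion estimate, yields a fully non-asymptotic bound with explicit constants (the paper defers this to a remark in the discussion), and avoids tying $n$ to a particular $\delta$-dependent value; what you give up is the sharp limiting constant $1-e^{-\theta_n}$ that the coupon-collector lemma delivers, settling for $\theta_n/2$, which is immaterial for the $\Omega(\cdot)$ statement. All the individual steps check out: the closed form for $L$ uses that the $M_{1i}$ have equal volume so the indicator sum collapses to $Z$; the Bonferroni step correctly uses $1-2/m\leq(1-1/m)^2$ so that $\binom m2(1-2/m)^n\leq\tfrac12\theta_n^2$; and $\theta_{n^\dagger}\geq(1-1/m)\theta_{n^\dagger-1}\geq 1-1/m\geq\tfrac12$ for $m\geq2$.
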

\begin{proof}
Notice that since $$m = \Theta \left( \frac{1}{(4 \tau)^d} \right)$$ the theorem is 
implied by the statement that
$$n = m \log m + m \log \left( \frac{1}{\delta} \right)  \implies R_n \geq c \delta$$
for some constant $c$.
We will focus on proving this claim.

Let us consider the case when samples are drawn according to $P_0$. From Lemma \ref{lem::ccasymp} we have that if $$n = m \log m + m  \log \left( \frac{1}{\delta} \right)$$
then the probability with which we do not see a point in each of the $m$ spheres is 
$$1 - \exp (-\exp (- \log 1/\delta)) \geq c \delta$$
since $\delta < 1$, for some constant $c$.
It is easy to see that if we do not see a point in each of the $m$ spheres then
$$L(\boldX) \geq \frac{1}{m} \frac{1}{(1 - 1/m)^n} \defeq T_{m,n}$$

When $n = m \log m + m \log \left(\frac{1}{\delta} \right)$,
$$T_{m,n} \rightarrow \frac{1}{\delta} > 1$$
so asymptotically the likelihood ratio test always rejects the null.

From this we can see the probability of a Type I error $\rightarrow c \delta$, and 
$R_n^T \geq c \delta$, which gives 
$$R_n \geq \frac{c}{2} \delta$$ 
as desired.
\end{proof}

\section{Discussion}
In this note we have established tight minimax rates for the problem of homology inference in 
the noiseless case. The intuition behind the construction extends to the noisy cases considered in
\cite{Balakrishnan2011} in a straightforward way. 

Although the bound we have shown is an asymptotic lower bound, a finite sample lower bound follows
in a straightforward way by replacing the asymptotic calculation in Lemma \ref{lem::ccasymp} with 
finite sample estimates.

We also expect similar constructions to be useful 
in establishing tight lower bounds 
for the problems of manifold estimation in Hausdorff distance considered in \cite{Genovese2012,Genovese.JMLR},
and for the problem of estimation of persistence diagrams in bottleneck distance considered in \cite{chazal2013b}.

{
{
\bibliographystyle{unsrt}
\bibliography{biblio_small}
}
}
%
%
%

\end{document}